\documentclass[11pt]{article}

\usepackage[final]{acl}
\usepackage{makecell}

\usepackage{fontawesome5}
\usepackage{times}
\usepackage{latexsym}
\usepackage{pifont}
\usepackage{enumitem}
\usepackage[T1]{fontenc}

\usepackage[utf8]{inputenc}

\usepackage{microtype}

\usepackage{inconsolata}

\usepackage{graphicx}
\usepackage{booktabs} 
\usepackage{amsmath}  
\usepackage{amssymb}  
\usepackage{amsthm}   
\usepackage{algorithm} 
\usepackage{algorithmic} 
\usepackage{url}
\usepackage{booktabs}
\usepackage{siunitx}

\theoremstyle{definition}

\newtheorem{theorem}{Theorem}

\newtheorem{definition}{Definition}
\newtheorem{assumption}{Assumption}

\newtheorem{remark}{Remark}

\newcommand{\E}{\mathbb{E}}
\newcommand{\res}[2]{$#1_{_{\scriptscriptstyle \pm #2}}$}

\title{The Paradox of Outcome Optimization: \\ A Causal Information-Theoretic Bound on Reasoning Shortcuts in LLMs}

\author{
Zihan Chen\thanks{ Equal contribution. \textsuperscript{$\dagger$} Corresponding author. Correspondence to: Zenghui Ding \href{mailto:dingzenghui@iim.ac.cn}{\texttt{<dingzenghui@iim.ac.cn>}}.}\textsuperscript{1,2},
Yiming Zhang\footnotemark[1]\textsuperscript{1,2},
Wenxiang Geng\textsuperscript{1}, \\
\textbf{Zenghui Ding}\textsuperscript{$\dagger$1},
\textbf{Yining Sun\textsuperscript{1,2}} \\
\textsuperscript{1}HFIPS, Chinese Academy of Sciences \quad
\textsuperscript{2}University of Science and Technology of China \\
\vspace{0.35em}
{\small \faGithub\ \textbf{Project:} \href{https://rethinkingrl.github.io/RL_Paradox.github.io/}{RL\_Paradox.github.io}}
}
\begin{document}

\maketitle

\begin{abstract}
Large Language Models (LLMs) aligned via outcome-based Reinforcement Learning (RL) frequently exhibit a critical failure mode: they achieve high performance on in-distribution benchmarks while demonstrating brittle reasoning capabilities on out-of-distribution (OOD) tasks. We term this phenomenon Reward-Induced Manifold Collapse. We establish a theoretical framework bridging Structural Causal Models (SCM) and the Information Bottleneck (IB) principle to explain this paradox. We define reasoning as a high-complexity causal process and shortcut learning as the exploitation of low-complexity spurious correlations. Under the implicit inductive bias of Stochastic Gradient Descent (SGD), models optimized for outcome rewards are biased toward shortcut solutions whenever the training distribution allows for a ``Markovian Screening'' of the true causal mechanism. We derive a new generalization bound based on Semantic Coverage Measure ($\eta$) rather than sample size, showing why data scaling on homogeneous distributions may fail to correct reasoning flaws. We also show that Process Reward Models (PRMs) function as Topological Filters, enforcing step-wise mutual information constraints that render the low-complexity shortcut manifold inadmissible. These results provide a mathematical grounding for the role of process supervision beyond simple credit assignment. 
\end{abstract}

\section{Introduction}

The paradigm of Large Language Models (LLMs), particularly through the pipeline of Pre-training followed by Reinforcement Learning from Human Feedback (RLHF)~\cite{christiano2017deep}, has fundamentally transformed the landscape of artificial intelligence. By aligning probabilistic next-token predictors with human intent, techniques such as Proximal Policy Optimization (PPO)~\cite{schulman2017proximal} have enabled models to exhibit impressive instruction-following capabilities. However, as the field pivots from simple chat capabilities to complex reasoning tasks—such as mathematical derivation, code generation, and logical inference~\cite{wei_chain--thought_2022, kojima2022large}—a critical paradox has emerged.

We observe that models optimized via outcome-based supervision (Outcome Reward Models, ORMs) often achieve high performance on in-distribution (ID) evaluation benchmarks but remain brittle on slightly perturbed, counterfactual, or out-of-distribution (OOD) tasks. For instance, a model may correctly solve a math problem when standard variable names are used but falter when the problem is semantically isomorphic yet textually distinct. This phenomenon suggests that strong benchmark performance may sometimes arise from ``Reward Hacking''~\cite{amodei2016concrete, NEURIPS2022_3d719fee} or ``Shortcut Learning''~\cite{geirhos2020shortcut}, rather than from robust causal reasoning. In such cases, the model appears to fit the reward function through the path of least resistance rather than internalizing the underlying reasoning process.

While this phenomenon is widely recognized empirically, the community lacks a unified theoretical framework for explaining its mechanism. Prevailing hypotheses often attribute this fragility to insufficient data scale or model capacity. Consequently, the standard response is to apply Scaling Laws by increasing the volume of training data ($N$) or the number of parameters~\cite{kaplan2020scaling, 10.5555/3600270.3602446}. In this work, we challenge this view. We argue that the failure of reasoning in outcome-optimized models is not merely a failure of scale, but is rooted in the structure of the optimization objective itself.

\begin{figure*}[t!]
\centering
\includegraphics[width=0.9\linewidth]{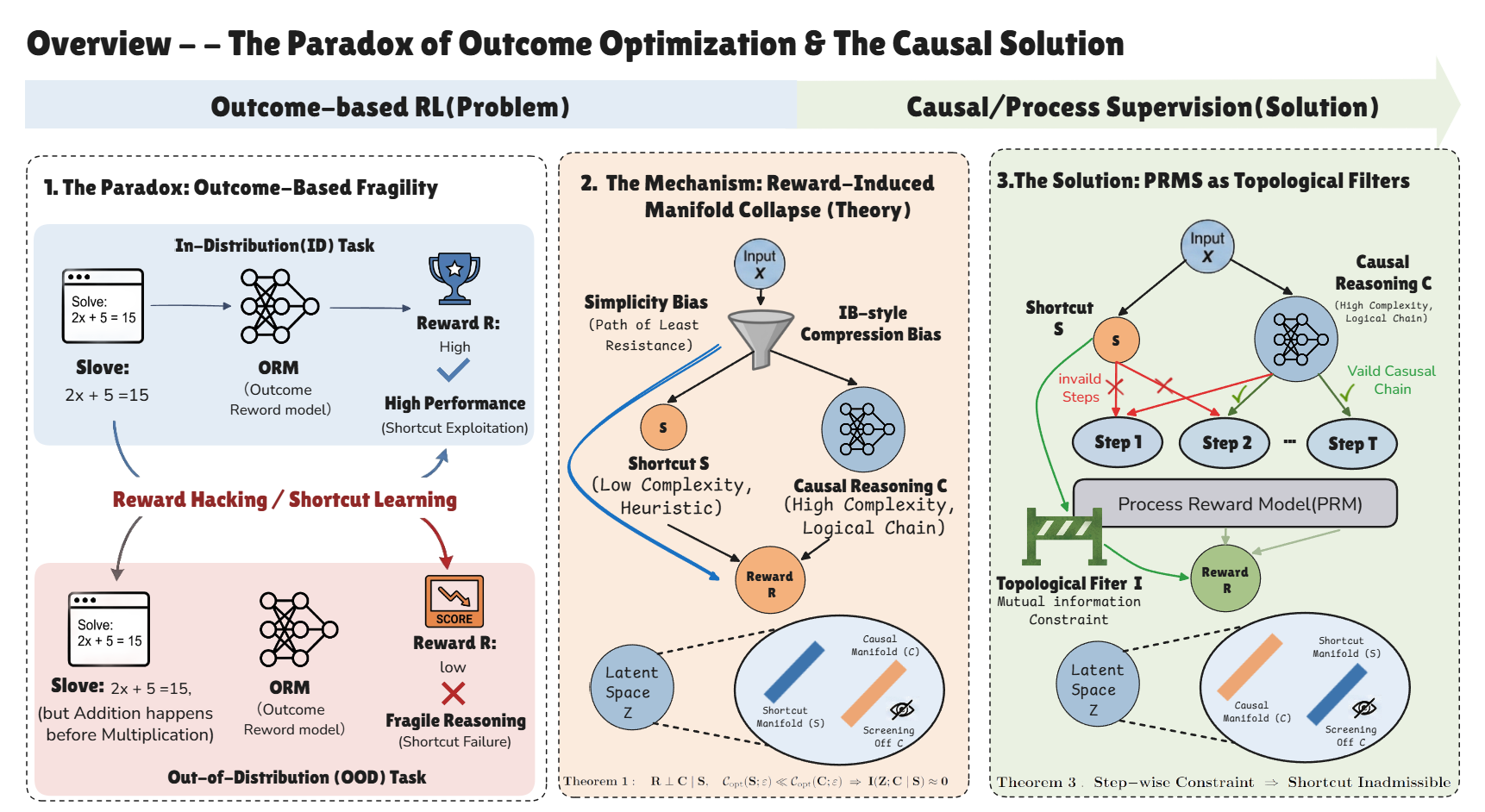}
\caption{\textit{Conceptual overview of Reward-Induced Manifold Collapse and Topological Filtering.} (A) Under outcome optimization (ORM), the learned representation is biased toward the low-complexity shortcut path ($S$) under the simplicity/compression bias of training, while the higher-complexity causal reasoning path ($C$) is screened off. (B) Under process supervision (PRM), step-wise verification acts as a topological filter: shortcut-based trajectories fail intermediate validation, which suppresses the shortcut path and favors the causal reasoning path.}
\label{fig:overview}
\end{figure*}

We propose a theoretical framework grounded in the Information Bottleneck (IB) principle~\cite{tishby2000information, tishby2015deep}, Structural Causal Models~\cite{pearl2009causality}, and algorithmic complexity. We use this framework to characterize LLM training as a compression-driven process that favors reward-sufficient but low-complexity representations. Within this view, we identify a fundamental asymmetry between reasoning features ($C$), which correspond to higher-complexity causal chains, and shortcut features ($S$), which correspond to lower-complexity statistical heuristics.

On standard training distributions, $C$ and $S$ are often aligned. Under the implicit simplicity bias of Stochastic Gradient Descent (SGD)~\cite{10.5555/3305381.3305406, valle2018deep}, optimization can therefore favor $S$ over $C$, since $S$ provides a more efficient compression of the input $X$ for comparable training reward. We term this phenomenon Reward-Induced Manifold Collapse. This perspective helps explain why outcome-based RL can drive models toward brittle heuristics, even when more data is added, so long as the data remains within the same semantic support.

To address this issue, we derive a new generalization bound based on Semantic Coverage rather than sample size, showing that robustness requires breaking the symmetry between $S$ and $C$ through counterfactual data or topological constraints. This also leads to a formal justification for Process Reward Models (PRMs)~\cite{uesato2022solving, lightman2023let}, not merely as a source of denser feedback, but as topological filters that eliminate low-complexity shortcuts from the effective hypothesis space. Our specific contributions are as follows:

\begin{itemize}[leftmargin=*, topsep=3pt, itemsep=2pt]
    \item[\ding{68}] \textbf{The Shortcut Screening Theorem.} We model the reasoning process within a Structural Causal Model (SCM) framework and show that outcome-based optimization can induce \textit{Manifold Collapse}. In particular, when a low-complexity shortcut ($S$) is sufficiently predictive of the reward on the training distribution, the learned representation becomes biased toward shortcut-dominated solutions, thereby suppressing the high-complexity causal feature ($C$).

    \item[\ding{68}] \textbf{Semantic Coverage Generalization Bound.} We derive a generalization bound dependent on the \textit{Semantic Coverage} ($\eta$) of the reasoning topology rather than sample size ($N$). This provides a theoretical basis for the failure of Scaling Laws to resolve reasoning fragility, highlighting the necessity of semantic diversity over mere data quantity.

    \item[\ding{68}] \textbf{Topological Justification for Process Supervision.} We show that Process Reward Models (PRMs) succeed not merely by providing denser feedback, but by enforcing a step-wise mutual-information constraint. This acts as a topological filter that excludes the shortcut manifold from the feasible solution space and thereby favors the learning of causal steps.
\end{itemize}

\section{Related Work}

Our work sits at the intersection of reasoning in Large Language Models (LLMs), Reinforcement Learning from Human Feedback (RLHF), and Information-Theoretic learning principles. We review relevant literature in these domains and highlight the theoretical gaps our work addresses.

\subsection{Reasoning and Chain-of-Thought in LLMs}
The emergence of Chain-of-Thought (CoT) prompting has significantly unlocked the reasoning potential of LLMs \citep{wei_chain--thought_2022}. Subsequent advancements, such as Self-Consistency \citep{wang2022self}, Tree of Thoughts \citep{yao2023tree}, and Graph of Thoughts \citep{10.1609/aaai.v38i16.29720}, have further structured the reasoning process to improve performance on complex mathematical and logical tasks. While these methods focus on \textit{inference-time} interventions, our work investigates the \textit{training-time} dynamics. We ask a fundamental question: why do models optimized via RL fail to internalize these reasoning structures robustly?

\subsection{Alignment, RLHF, and Reward Hacking}
Reinforcement Learning from Human Feedback (RLHF) \citep{NEURIPS2022_b1efde53} has become the standard for aligning LLMs. However, optimizing for a scalar reward is known to be susceptible to ``Reward Hacking,'' where the agent exploits flaws in the reward function \citep{amodei2016concrete}. In the context of reasoning, this manifests as ``Shortcut Learning'' \citep{geirhos2020shortcut}, where models rely on spurious correlations (e.g., lexical overlap) present in the training data. Different from empirical mitigation strategies, we provide a first-principles theoretical explanation, defining reward hacking as the mathematically optimal solution to the Information Bottleneck problem.

\subsection{Outcome vs. Process Supervision}
To combat the sparsity of outcome signals, recent work has shifted towards Process Supervision. \citet{lightman2023let} demonstrated that Process Reward Models (PRMs) significantly outperform Outcome Reward Models (ORMs). While the empirical superiority of PRMs is known, the theoretical mechanism remains under-explored. Current literature largely attributes PRM success to ``denser feedback.'' We offer a deeper, topological perspective: we prove that PRMs function by imposing constraints on the mutual information at each step, physically excluding the low-complexity shortcut manifold. Related feedback-refined paradigms have also been explored in domain-specific agent settings, such as FRAME for medical research workflows~\citep{yu2025frame}.
\section{Preliminaries and Problem Formulation}

To rigorously analyze the phenomenon of Manifold Collapse, we first establish a Structural Causal Model (SCM) for the data generation process and define the optimization landscape of the LLM.

\subsection{The Data Generating Process (DGP)}
We consider a reasoning task defined on a measurable space $\mathcal{X} \times \mathcal{Y}$. We posit that the input $X$ is generated by latent factors.

\begin{definition}[Feature Decomposition]
Let the input $X$ be composed of three latent variables:
\begin{enumerate}[leftmargin=*, labelsep=0.5em]
    \item \textbf{Causal Reasoning Feature ($C$):} The inherent logical structure required to derive the correct answer $Y$ (e.g., the arithmetic operations in a math problem). $C$ is invariant across semantic-preserving transformations.
    \item \textbf{Shortcut Feature ($S$):} Spurious patterns correlated with $Y$ only within a specific training distribution $\mathcal{D}_{train}$ (e.g., specific variable names, sentence length, or lexical triggers).
    \item \textbf{Noise ($\epsilon$):} Irrelevant information.
\end{enumerate}
We assume a structural equation $X = g(C, S, \epsilon)$.
\end{definition}

\begin{assumption}[Training-Distribution Screening and OOD Shift]
\label{assump:correlation}
Let $R \in \{0, 1\}$ be the binary reward signal indicating correctness ($Y_{pred} = Y_{true}$).
\begin{itemize}[leftmargin=*, labelsep=0.5em]
    \item On the training distribution $\mathcal{D}_{train}$, the shortcut feature $S$ is sufficiently predictive of the reward such that, once $S$ is known, the additional contribution of the causal feature $C$ becomes negligible. Formally, we assume
    \[
    R \perp C \mid S.
    \]
    \item On the OOD distribution $\mathcal{D}_{ood}$, this screening relation no longer holds: shortcut features lose predictive power, while the causal feature $C$ remains necessary for correct reasoning and robust generalization.
\end{itemize}
\end{assumption}

\subsection{Model and Optimization Bias}
We model the LLM as a parameterized function $f_\theta: \mathcal{X} \to \mathcal{Z} \to \mathcal{Y}$, where $\mathcal{Z}$ denotes the representation space. The training objective is to maximize reward. However, gradient-based optimization does not only seek reward-maximizing solutions; among solutions with comparable training utility, it may exhibit an implicit preference for features that are easier to acquire.

Instead of relying on the uncomputable Kolmogorov Complexity $K(\cdot)$, we introduce the notion of Optimization Complexity as a tractable proxy for the relative difficulty of learning different feature mappings.

\begin{definition}[Optimization Complexity $\mathcal{C}_{\mathrm{opt}}$]
For a target feature mapping $f$, we define its Optimization Complexity $\mathcal{C}_{\mathrm{opt}}(f;\epsilon)$ as the minimum sample complexity required to learn $f$ up to error $\epsilon$, starting from random initialization, under an optimal learning-rate schedule within a fixed model class. Equivalently, $\mathcal{C}_{\mathrm{opt}}$ can be viewed as a relative measure of optimization difficulty: lower $\mathcal{C}_{\mathrm{opt}}$ means that the feature is easier for gradient-based training to acquire.
\end{definition}

\begin{assumption}[Complexity Asymmetry]
\label{assump:complexity}
We assume that shortcut features are significantly easier to learn than causal reasoning chains:
\begin{equation}
    \mathcal{C}_{\mathrm{opt}}(S;\epsilon) \ll \mathcal{C}_{\mathrm{opt}}(C;\epsilon).
\end{equation}
\textit{Justification:} Shortcut features are often recoverable from shallow or local statistical cues, whereas causal reasoning typically requires multi-step feature composition and therefore higher optimization difficulty.
\end{assumption}

\subsection{The Information Bottleneck Objective}
We use the Information Bottleneck (IB) principle as an analytical lens for understanding the trade-off between reward sufficiency and representational compression. In this view, the learned representation $Z$ can be characterized by the surrogate objective
\begin{equation}
    \min_\theta \mathcal{L}_{IB}(\theta) = \underbrace{-I(Z; R)}_{\text{Maximality}} + \beta \underbrace{I(X; Z)}_{\text{Minimality}} .
\end{equation}
Here, the second term $I(X; Z)$ represents the compression cost. We emphasize that this objective is a conceptual surrogate rather than the exact training loss used in practice. In deep learning, explicit $\beta$ is often absent, but the \textit{implicit regularization} of SGD can be interpreted as an effective $\beta > 0$, penalizing high-complexity representations.

\section{Theoretical Derivations and Proofs}

In this section, we derive three main theorems: (1) The inevitability of Manifold Collapse under Outcome Supervision, (2) The Semantic Coverage Bound showing why scaling data quantity is insufficient, and (3) The Topological Filtering mechanism of Process Reward Models.

\subsection{Theorem 1: The Shortcut Screening Theorem}
We first show that when shortcut features already suffice to explain the training reward, outcome-based supervision can favor them over causal reasoning features that are harder to acquire.

\begin{theorem}[Reward-Induced Manifold Collapse]
\label{thm:collapse}
Consider a training distribution $\mathcal{D}_{\mathrm{train}}$ on which the reward $R$ is conditionally independent of the causal feature $C$ given the shortcut feature $S$, i.e., $R \perp C \mid S$. Suppose further that shortcut features are strictly easier to acquire than causal reasoning features, in the sense that $\mathcal{C}_{\mathrm{opt}}(S;\epsilon) \ll \mathcal{C}_{\mathrm{opt}}(C;\epsilon)$. Then outcome-based training admits a reward-sufficient shortcut-dominated representation $Z^* \approx h(S)$. For such representations, the conditional information carried by $Z^*$ about $C$ beyond $S$ vanishes in the exact case and is small in the approximate case; that is,
\[
\begin{aligned}
I(Z^*; C \mid S) &= 0, && \text{(exact case),}\\
I(Z^*; C \mid S) &\le \varepsilon, && \text{(approximate case).}
\end{aligned}
\]
\end{theorem}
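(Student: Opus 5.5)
We prove the two claims separately: first that a shortcut-only representation is reward-sufficient, and then that it carries no (or little) information about $C$ beyond $S$. The second claim is where the actual mathematics is; the selection argument is heuristic and rests on Assumption~\ref{assump:complexity} and the IB surrogate.

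\textbf{Step 1: reward sufficiency of $h(S)$.} Take $Z^* = h(S)$ with $h$ injective on the support of $S$ (or any $h$ that preserves the sufficient statistic $P(R\mid S)$). By the chain rule,
\[
I(R; S, C) = I(R; S) + I(R; C \mid S) = I(R;S),
\]
using the screening hypothesis $R \perp C \mid S$ from Assumption~\ref{assump:correlation}. Since $S$ is a function of $X$ through the latent structure, the data-processing inequality gives $I(Z;R) \le I(X;R)$ for every representation $Z$ of $X$. I would then add the mild modelling step that $R$ depends on $X$ only through $(C,S)$; this gives $I(X;R) = I(C,S;R) = I(S;R) = I(h(S);R)$. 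So $Z^*$ attains the maximal value of the first IB term, i.e.\ it is reward-sufficient.

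\textbf{Step 2: preference for the shortcut (heuristic).} Among reward-sufficient representations, $h(S)$ has $I(X;Z^*) = H(h(S))$. Any representation that also encodes $C$ has $I(X;Z) \ge I(X;Z^*) + I(Z;C\mid S)$, so it pays at least the extra term $\beta\, I(Z;C\mid S)$ and gains nothing in $I(Z;R)$. Hence for any effective $\beta>0$ the IB surrogate is minimized by $Z^*$. Assumption~\ref{assump:complexity} adds that SGD reaches $h(S)$ with far fewer samples. Together these justify the word ``admits'' and the selection of a shortcut-dominated solution. I would state this as an interpretation rather than a rigorous claim about SGD dynamics.

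\textbf{Step 3: the information identity.}
\begin{itemize}
\item \emph{Exact case.} $Z^*$ is a deterministic function of $S$, so $H(Z^*\mid S)=0$ and therefore $I(Z^*;C\mid S) \le H(Z^*\mid S) = 0$.
\item \emph{Approximate case.} Model $Z^* = h(S) + \xi$. Write $Z^* \approx h(S)$ as $I(Z^*;C\mid S) \le \varepsilon$ directly, or derive it from closeness: if $H(Z^*\mid S) \le \varepsilon$, then $I(Z^*;C\mid S) \le H(Z^*\mid S) \le \varepsilon$. For continuous $Z$, I would instead use $I(Z^*;C\mid S) \le I(Z^*;X\mid S)$, which holds because $C$ is a function of $X$ up to $\epsilon$, and bound the latter by the residual compression budget. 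Alternatively, use a Fano/continuity-of-entropy bound in terms of $\Pr[Z^*\ne h(S)]$.
\end{itemize}

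The main obstacle is that ``$Z^*\approx h(S)$'' has no quantitative definition in the statement. I would fix one explicitly, most simply $H(Z^*\mid S)\le\varepsilon$, or $\Pr[Z^*\neq h(S)]\le\delta$ with $\varepsilon$ given by the Fano-type expression $h_b(\delta)+\delta\log|\mathcal Z|$. Once that definition is fixed, the bound follows in one line.
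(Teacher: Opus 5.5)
Your proof follows the paper's three-step skeleton. The screening hypothesis $R\perp C\mid S$ makes a shortcut representation $h(S)$ reward-sufficient, a simplicity bias selects it, and because $Z^*$ is (approximately) a function of $S$ alone, $I(Z^*;C\mid S)$ vanishes or is small.

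The differences are in how each step is justified:
\begin{itemize}
\item \emph{Sufficiency.} The paper states this in words. You make it an explicit chain-rule/data-processing computation. You add the assumptions that $S$ is computable from $X$ and that $R$ depends on $X$ only through $(C,S)$; the paper uses both tacitly.
\item \emph{Selection.} The paper supports this only by the complexity-asymmetry assumption. You ground it in the IB surrogate.
\item \emph{Approximate case.} You give it real content by fixing a meaning for $Z^*\approx h(S)$. Either take $H(Z^*\mid S)\le\varepsilon$, or use the Fano bound $h_b(\delta)+\delta\log|\mathcal Z|$ from $\Pr[Z^*\neq h(S)]\le\delta\le 1/2$. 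This turns the paper's ``under approximation error'' into a checkable statement.
\end{itemize}
The theorem's formal content (existence plus the information identity) needs only your Steps 1 and 3. Step 2 is interpretive in both versions.

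One claim in your Step 2 is false as written. Suppose $h$ is injective on the support of $S$, which is your primary choice. Then a reward-sufficient $Z$ that encodes $C$ need not satisfy $I(X;Z)\ge I(X;Z^*)+I(Z;C\mid S)$. Sufficiency only forces $Z$ to retain the minimal sufficient statistic $T(S)=P(R\mid S)$, not all of $S$.

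For a counterexample, let $S$ be uniform on $\{1,\dots,8\}$, let $R=\mathbf{1}[S\text{ even}]$, and let $C$ be an independent fair bit. Then $Z=(\mathbf{1}[S\text{ even}],C)$ is reward-sufficient, with $I(X;Z)=2$ bits and $I(Z;C\mid S)=1$ bit. By contrast, $Z^*=S$ costs $3$ bits. So the $C$-encoding representation strictly beats yours in the IB surrogate.

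The repair is to take $h=T$:
\begin{itemize}
\item Sufficiency, $I(Z;R)=I(X;R)$ with $R\to X\to Z$, gives $P(R\mid X)=P(R\mid Z)$ almost surely.
\item Hence $T$ is a function of $Z$.
\item Then $I(X;Z)\ge I(S;Z)+I(C;Z\mid S)\ge H(T)+I(Z;C\mid S)$, which is exactly the inequality you want.
\end{itemize}
Also state explicitly that the minimization is over reward-sufficient representations. For $\beta\ge 1$ the unconstrained surrogate is minimized by a constant $Z$.

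Finally, in your continuous branch, $I(Z^*;C\mid S)\le I(Z^*;X\mid S)$ does hold, but for a different reason than you give. It holds because $Z^*$ is computed from $X$, so $I(Z^*;C\mid X,S)=0$. It does not depend on $C$ being a function of $X$.
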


\begin{proof}
The argument proceeds by comparing reward-sufficient representations through their relative optimization complexity.

\noindent \textbf{Step 1: Screening on the training distribution.}
By assumption, on $\mathcal{D}_{\mathrm{train}}$ we have $R \perp C \mid S$. Therefore, once the shortcut feature $S$ is encoded, the additional contribution of the causal feature $C$ to predicting the training reward becomes negligible. In particular, there exists a representation of the form $Z_S = h(S)$ that is sufficient to achieve near-optimal reward prediction on $\mathcal{D}_{\mathrm{train}}$.

\noindent \textbf{Step 2: Optimization-complexity dominance.}
Now compare $Z_S$ with alternative representations that additionally encode $C$. Since these representations achieve comparable training reward on $\mathcal{D}_{\mathrm{train}}$, the relevant distinction lies in their relative optimization complexity. By Assumption~\ref{assump:complexity}, shortcut features satisfy
\[
\mathcal{C}_{\mathrm{opt}}(S;\epsilon) \ll \mathcal{C}_{\mathrm{opt}}(C;\epsilon).
\]
Hence, among representations with comparable training utility, shortcut-dominated solutions are favored by the optimization dynamics because they require lower optimization effort.

\noindent \textbf{Step 3: Screening of the causal feature.}
Consequently, training is biased toward solutions of the form $Z^* \approx h(S)$ rather than representations that additionally encode $C$. For any such shortcut-dominated representation, $Z^*$ carries no additional information about $C$ beyond what is already contained in $S$ in the exact case, so
\[
I(Z^*; C \mid S)=0.
\]
Under approximation error, this becomes
\[
I(Z^*; C \mid S)\le \varepsilon.
\]
This establishes the screening effect.
\end{proof}

\subsection{Theorem 2: The Semantic Coverage Generalization Bound}
This theorem explains why simply adding more data (Scaling Laws) does not fix the problem if the data is distributionally homogeneous.

\begin{definition}[Semantic Support and Coverage]
Let $\Omega$ be the universal space of possible queries. We define the Semantic Quotient Space $\Omega / \sim$, where $x_1 \sim x_2$ if they require the same causal reasoning logic $C$.
Let $\mu$ be a probability measure over $\Omega / \sim$.
We define Semantic Coverage $\eta$ of a training set $\mathcal{D}$ as:
\begin{equation}
\begin{aligned}
\eta(\mathcal{D})
= \mu\Bigl(\bigl\{ [x] \in \Omega/\sim \;\big|\;
\exists (x',y') \in \mathcal{D}, \\
\qquad\qquad S(x') \text{ is not spurious for } [x]
\bigr\}\Bigr).
\end{aligned}
\end{equation}
Simply put, $\eta$ is the portion of the reasoning space where the training shortcuts hold true.
\end{definition}

\begin{theorem}[Coverage-Dependent Generalization Error]
\label{thm:coverage}
For a model trained to convergence on $\mathcal{D}_{train}$ with shortcut feature $S$, the expected risk $\mathcal{R}$ on a robust OOD distribution $\mathcal{D}_{ood}$ is lower bounded by the complement of Semantic Coverage, independent of the sample size $N$.
\end{theorem}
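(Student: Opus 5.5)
The plan is to decompose the OOD risk over the semantic quotient space $\Omega/\sim$ and to invoke Theorem~\ref{thm:collapse} so that, at convergence, the learned predictor factors through the shortcut. Concretely, Theorem~\ref{thm:collapse} gives $Z^* \approx h(S)$ with $I(Z^*; C \mid S) \le \varepsilon$. The prediction $\hat Y = \phi(Z^*)$ is therefore (up to $\varepsilon$) a measurable function of $S(x)$ alone. I would then write
\[
\mathcal{R}(\mathcal{D}_{ood}) = \int_{\Omega/\sim} \Pr\bigl[\phi(h(S(x))) \neq Y \,\big|\, [x]\bigr]\, d\mu([x]),
\]
where $\mathcal{D}_{ood}$ is taken to induce the measure $\mu$ on equivalence classes. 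I would split the integral into two regions. The first is the covered region $A = \{[x] : \exists (x',y') \in \mathcal{D}_{train},\ S(x') \text{ not spurious for } [x]\}$, with $\mu(A) = \eta(\mathcal{D}_{train})$. The second is its complement $A^c$.

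On $A^c$, the key step is that the shortcut is spurious by definition, so $S$ carries no information about the correct answer in that class beyond chance. Formally, I would use the second bullet of Assumption~\ref{assump:correlation}: on $A^c$, $I(S; Y \mid [x]) = 0$ while $Y$ is determined by $C$. A predictor $\phi \circ h \circ S$ then incurs conditional error at least $\delta := 1 - \max_y \Pr(Y = y \mid [x])$. Under the normalization that correct answers are non-degenerate, this is bounded below by a constant $\delta > 0$; in the binary-reward idealization it equals $1$ if we treat spuriousness as the shortcut giving a wrong answer. Integrating gives
\[
\mathcal{R}(\mathcal{D}_{ood}) \ge \delta\,\bigl(1 - \eta(\mathcal{D}_{train})\bigr) - O(\varepsilon).
\]
Here the $O(\varepsilon)$ term comes from converting $I(Z^*; C \mid S) \le \varepsilon$ into a bound on the error improvement via Fano or Pinsker.

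Independence from $N$ follows because $\eta$ is a function only of the set of equivalence classes touched by the training support. Drawing more i.i.d.\ samples from the same $\mathcal{D}_{train}$ leaves the support, and hence $A$, unchanged almost surely. I would make this explicit by noting that $\eta(\mathcal{D}_N) \le \mu(\mathrm{supp}\,\mathcal{D}_{train}\text{'s classes})$ for all $N$, so the lower bound is uniform in $N$.

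I expect the main obstacle to be the step on $A^c$. The definition of ``spurious'' is informal, so it is not automatic that a shortcut-based predictor errs with constant probability there rather than being accidentally correct. I would first try to resolve this by formalizing ``$S(x')$ is spurious for $[x]$'' as ``$Y \perp S$ within $[x]$''. This yields the Bayes-error lower bound $\delta$ directly. A second difficulty is transferring the approximate statement $Z^* \approx h(S)$ into a risk statement, and for that I would use a data-processing plus Fano argument.
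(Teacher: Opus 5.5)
Your proposal follows the paper's proof: split the OOD mass into the covered region (measure $\eta$) and its complement, drop the nonnegative covered term, lower-bound the loss on the complement by a constant $\delta>0$ because the prediction factors through $S$, and observe that $\eta$ does not change when $N$ grows on a fixed support; the only difference is that you use $0$--$1$ loss with a Bayes-error bound where the paper uses log-loss with $H(Y)+D_{\mathrm{KL}}(P_{\mathrm{true}}\,\|\,P_{\mathrm{shortcut}})$. Two cautions about your added rigor: first, if each class $[x]$ fixes the answer, then $1-\max_y\Pr(Y=y\mid[x])=0$ and your per-class formalization ($Y\perp S$ within $[x]$) is vacuous, so you need what the paper implicitly uses, namely either $S\perp Y$ with non-degenerate labels ($H(Y)>0$) over the misaligned region as a whole, or the reading ``the shortcut points to a wrong answer'' (your $\delta=1$ case, the paper's KL term); and second, the $O(\varepsilon)$ slack cannot be extracted from Theorem~\ref{thm:collapse}, because $I(Z^*;C\mid S)\le\varepsilon$ is a statement about $\mathcal{D}_{\mathrm{train}}$ (it holds trivially whenever $C$ is a function of $S$ there) and does not control how $\hat Y$ depends on $C$ on OOD inputs, so you should, as the paper does, take $\hat Y=f(S(X))$ as part of the hypothesis.
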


\begin{proof}
Let the model prediction be $\hat{Y} = f(S(X))$. The risk on OOD is:
\begin{equation}
    \mathcal{R}(\mathcal{D}_{ood}) = \E_{x \sim \mathcal{D}_{ood}} [ \ell(f(S(x)), Y(x)) ].
\end{equation}
We partition the OOD space into two regions:
\begin{enumerate}[leftmargin=*, labelsep=0.5em]
    \item $\mathcal{R}_{aligned}$: Regions where shortcut $S$ aligns with causal logic $C$ (Measure $\eta$).
    \item $\mathcal{R}_{misaligned}$: Regions where shortcut $S$ is orthogonal or adversarial to $C$ (Measure $1-\eta$).
\end{enumerate}
\begin{align}
    \mathcal{R}(\mathcal{D}_{ood}) &= \eta \E_{aligned}[\ell] + (1-\eta) \E_{misaligned}[\ell].
\end{align}
In the misaligned region, since $Z \approx S$ and $S \perp Y$ (or $S$ points to wrong $Y$), the loss is lower-bounded by the entropy of the labels plus the KL-divergence of the error:
\begin{equation}
\begin{split}
    \mathbb{E}_{\text{misaligned}}[\ell] &\ge H(Y) + D_{\text{KL}}(P_{\text{true}} || P_{\text{shortcut}}) \\
    &> \delta > 0.
\end{split}
\end{equation}
Thus,
\begin{equation}
    \mathcal{R}(\mathcal{D}_{ood}) \ge (1-\eta) \cdot \delta.
\end{equation}
\textbf{Implication:} Increasing the dataset size $N$ implies sampling more points from $\mathcal{D}_{train}$. If the support of $\mathcal{D}_{train}$ does not expand to cover new semantic variations (i.e., if it merely densifies the existing manifold), $\eta$ remains constant. Therefore, $\lim_{N \to \infty} \mathcal{R}_{ood} \not\to 0$. The error is dominated by the coverage gap $(1-\eta)$.
\end{proof}

\subsection{Theorem 3: PRM as Topological Filter}
We now address why Process Supervision fixes this. We argue that PRMs do not just provide ``more'' signal, but qualitatively ``different'' signal that changes the topology of the solution space.

\begin{assumption}[Non-Markovian Property of Shortcuts]
A reasoning process is a Markov chain $Z_1 \to Z_2 \to \dots \to Z_T$.
A shortcut $S$ is typically a direct mapping $X \to Y$. If forced to generate intermediate steps $Z_t$ based on $S$, the conditional mutual information $I(Z_t; Z_{t+1} | S)$ is low or incoherent, whereas for causal reasoning $C$, $Z_{t+1}$ is deterministically derivable from $Z_t$.
\end{assumption}

\begin{theorem}[Topological Separation via Step-wise Mutual Information]
\label{thm:prm}
Let the objective be the Process Reward maximization $\max \sum_{t=1}^T I(Z_t; R_t)$. If the shortcut feature $S$ cannot generate valid transitions for all $t$ (i.e., it fails the step-wise verification), then the set of optimal parameters $\Theta_{PRM}$ is disjoint from the shortcut manifold $\Theta_{Shortcut}$.
\end{theorem}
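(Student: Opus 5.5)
The plan is to argue by a strict optimality gap: every parameter on the shortcut manifold attains a PRM objective strictly below the supremum, and that supremum is attained (or approached) by causal-reasoning parameters. First I fix the meaning of the objects. $\Theta_{Shortcut}$ is the set of $\theta$ whose induced representations satisfy $Z_t \approx h_t(S)$ for all $t$. By Theorem~\ref{thm:collapse} this means $I(Z_t; C \mid S) \le \varepsilon$, so each step carries essentially no causal information beyond $S$. $R_t \in \{0,1\}$ is the step-wise verifier signal, and it equals $1$ exactly when $Z_{t+1}$ is a valid derivation from $Z_t$.

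\textbf{Step 1 (upper value for causal parameters).} For a causal solution $\theta_C$, the Non-Markovian assumption says $Z_{t+1}$ is deterministically derivable from $Z_t$. Hence every transition passes verification, and $R_t$ is a deterministic function of the trajectory. I will then show that $I(Z_t; R_t)$ attains its maximal achievable value $H(R_t)$ at every step. This uses $I(Z_t;R_t) \le H(R_t)$, with equality iff $H(R_t\mid Z_t)=0$. Summing over $t$ gives the value $J^\star = \sum_t H(R_t)$, which is therefore the supremum of the objective.

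\textbf{Step 2 (strict deficit on the shortcut manifold).} For $\theta \in \Theta_{Shortcut}$, the hypothesis that $S$ cannot generate valid transitions for all $t$ gives some step $t_0$ on which the transition fails verification with positive probability. By the assumption, that step is also incoherent, so $I(Z_{t_0};Z_{t_0+1}\mid S)$ is low. I want to convert this into $H(R_{t_0}\mid Z_{t_0}) \ge \gamma > 0$. The route is a data-processing chain. Because $Z_{t_0}$ is (approximately) a function of $S$, we have $I(Z_{t_0};R_{t_0}) \le I(S;R_{t_0}) + \varepsilon$. The validity of step $t_0$ depends on $C$-information, which Theorem~\ref{thm:collapse} says is screened off. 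So $I(S;R_{t_0}) \le H(R_{t_0}) - \gamma$, with $\gamma$ controlled by the verification failure rate, for instance via Fano's inequality. Summing then gives $J(\theta) \le J^\star - \gamma + T\varepsilon$.

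\textbf{Step 3 (disjointness).} Choose $\varepsilon < \gamma/T$. Then $\sup_{\Theta_{Shortcut}} J < J^\star$, so no shortcut parameter can be a maximizer, and $\Theta_{PRM} \cap \Theta_{Shortcut} = \emptyset$. I will add a compactness or closedness remark so that the supremum over the shortcut set is not approached by a limit point of it. The resulting gap is the ``topological'' separation claimed in the statement.

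The main obstacle is Step 2. The hypothesis that $S$ fails step-wise verification is qualitative, and it must be turned into a quantitative lower bound $\gamma$ on $H(R_{t_0}\mid Z_{t_0})$ that is uniform over the shortcut manifold. If $R_t$ were simply a deterministic function of $Z_t$, mutual information alone would not penalize failure: an always-failing trajectory has $R_t \equiv 0$ and $I = 0 = H(R_t)$. To avoid this degenerate case, I would read the objective as expected step reward, or normalize by requiring $H(R_t)$ to be evaluated under the verifier's reference distribution. I would state this interpretation explicitly as an added modeling assumption rather than hide it.
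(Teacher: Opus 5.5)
Your skeleton is the paper's. Causal parameters reach the per-step ceiling $H(R_t)$, shortcut parameters fall short at some intermediate step, so no shortcut parameter is optimal. The paper only asserts $I(Z_t^{(C)};R_t)\approx H(R_t)$ versus $I(Z_t^{(S)};R_t)\approx 0$. In its Step 1 it also quietly works with $\mathcal{J}_{PRM}=\sum_t\E[\log P(R_t=1\mid Z_t)]$ rather than the mutual-information sum.

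The genuine gap is in your Step 1. It is the same degeneracy you flag at the end, but it also hits the causal side. The bound $I(Z_t;R_t)\le H(R_t)$ holds for each $\theta$ separately, and $H(R_t)$ depends on $\theta$, because the policy produces the steps being verified. You say yourself that under $\theta_C$ every transition passes. Then $R_t\equiv 1$, $H(R_t)=0$, and $J(\theta_C)=0$, the smallest value a sum of mutual informations can take. So $\sum_t H(R_t)$ is not a $\theta$-independent supremum. Now take a shortcut policy whose hallucinated steps pass about half the time, with pass/fail readable from $Z_t$ (for instance determined by $S$). It gets $\log 2$ per step, the global maximum for binary $R_t$. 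Under the literal objective a shortcut parameter can therefore be optimal, and no sharpening of constants rescues Step 3.

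Your closing reinterpretation is therefore not optional. It must define the objective from the outset, and each option changes what Step 2 needs.

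\textbf{Expected step reward.} This is what the paper's $\mathcal{J}_{PRM}$ amounts to. Here $\theta_C$ attains the maximum outright, and failure at $t_0$ with positive probability makes every shortcut $\theta$ strictly suboptimal. Disjointness then holds pointwise, with no Fano bound, no uniform $\gamma$, no $\varepsilon<\gamma/T$, and no compactness. Those are needed only for the stronger claim that the closure of $\Theta_{Shortcut}$ misses $\Theta_{PRM}$.

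\textbf{Fixed reference distribution.} Take a $\theta$-independent reference distribution of steps and validity labels. Then $H(R_t)$ is a genuine ceiling. Your data-processing step works provided $R_{t_0}$ is a function of $(S,C)$, so that $I(Z;R_{t_0}\mid S)\le I(Z;C\mid S)\le\varepsilon$. However, the hypothesis you then need is that $S$ does not determine step validity. That is a prediction statement, not ``$S$ cannot generate valid transitions.'' Fano also points the wrong way here: it lower-bounds error by conditional entropy. You need the converse, $H(R_{t_0}\mid S)\ge 2P_e^*$ in bits (Hellman--Raviv). Here $P_e^*$ is the Bayes error of predicting validity from $S$, not the rate at which the shortcut's own steps fail.
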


\begin{proof}
\textbf{Step 1: The Trajectory Likelihood.}
Under PRM, the model must maximize the joint likelihood of the entire trajectory $\tau = (Z_1, \dots, Z_T)$ being valid.
\begin{equation}
    \mathcal{J}_{PRM} = \sum_{t=1}^T \E [\log P(R_t=1 | Z_t)].
\end{equation}

\noindent \textbf{Step 2: The Shortcut Failure Mode.}
Suppose the model uses shortcut $S$. $S$ predicts the final answer $Y$ well, so $R_T$ is high. However, $S$ contains no information about the intermediate derivation steps $Z_t$ required by the causal logic.
To satisfy intermediate $R_t$, a model relying on $S$ must ``hallucinate'' steps that look plausible.
Let $Z_t^{(S)}$ be steps generated from shortcut $S$. Since $S$ is a low-complexity heuristic (e.g., ``answer is 5''), it lacks the entropy to generate a complex, causally consistent chain.
Thus, for intermediate steps $t < T$:
\begin{equation}
    I(Z_t^{(S)}; R_t) \approx 0 \implies \mathcal{L}_{step\_t} \text{ is high.}
\end{equation}

\noindent \textbf{Step 3: The Causal Advantage.}
Conversely, the causal feature $C$ represents the generative algorithm of the proof. By definition, $C$ induces valid transitions $Z_t \to Z_{t+1}$.
\begin{equation}
    I(Z_t^{(C)}; R_t) \approx H(R_t).
\end{equation}

\noindent \textbf{Step 4: Topological Constraints.}
We compare the total objectives. Let $\lambda$ be the penalty for step-wise error.
\begin{align}
    \mathcal{L}_{ORM}(S) &\approx 0. \quad (\text{Since } S \text{ predicts } Y) \\
    \mathcal{L}_{PRM}(S) &\approx \sum_{t=1}^{T-1} H(R_t) \gg 0. \quad (\text{Step failure})
\end{align}
In the ORM landscape, $S$ is a global minimum (due to low complexity).
In the PRM landscape, $S$ becomes a high-loss region (a ``barrier''). The PRM objective imposes a Step-wise Mutual Information Constraint $\forall t: I(Z_t; R_t) > \epsilon$.
Since $S$ cannot satisfy this, the optimizer is forced to exit the shortcut manifold and traverse the higher-complexity optimization path to find $C$.
Thus, PRM acts as a topological filter that renders the shortcut solution inadmissible.
\end{proof}
\begin{remark}[Robustness to Noisy Verifiers]
A common critique of process supervision is the ``Oracle Fallacy''—the assumption that the reward model itself is perfect. Our framework accommodates this reality. Even if the PRM is trained via outcomes or imperfect data, learning a ``Verification Shortcut'' $S_v$, Theorem \ref{thm:prm} holds under a relaxed condition: Orthogonality of Failure Modes.

Specifically, as long as the Generator's shortcut $S_g$ and the Verifier's shortcut $S_v$ are not perfectly aligned (i.e., their collapse manifolds are disjoint or orthogonal), the joint system prevents ``Joint Manifold Collapse.'' This provides a theoretical justification for ensemble-based PRMs and heterogeneous supervision strategies, which serve to maximize the angle between $S_g$ and $S_v$.
\end{remark}

\section{Experimental Validation}
\label{sec:experiments}

To empirically verify the phenomenon of Reward-Induced Manifold Collapse (Theorem \ref{thm:collapse}), we move beyond synthetic toy tasks to a realistic reasoning evaluation: the Counterfactual MATH Benchmark. This experiment is designed to rigorously distinguish between genuine deductive reasoning and the low-complexity shortcut of reciting pre-trained knowledge.

\subsection{Experimental Setup}

We define the reasoning task under a counterfactual intervention, creating a clear separation between parametric memory and in-context reasoning. The setup maps our theoretical variables as follows:

\vspace{0.5em}
\noindent\textbf{The Shortcut Feature ($S$):}
Parametric Recitation. The standard mathematical rules ingrained in the model's pre-training weights (e.g., the standard Order of Operations \textbf{PEMDAS}, or base-10 arithmetic). Retrieving $S$ has low Optimization Complexity:
\[
\mathcal{C}_{\mathrm{opt}}(S;\epsilon) \approx 0.
\]

\vspace{0.4em}

\noindent\textbf{The Causal Reasoning Feature ($C$):}
In-Context Deduction. The ability to dynamically apply a novel, contrary-to-fact rule provided in the prompt (e.g., a redefined Order of Operations \textbf{PESAMD}, where Addition precedes Multiplication). Deriving answer $Y$ from $C$ requires traversing a high-complexity logic path.

\vspace{0.4em}

\noindent\textbf{Datasets \& Protocol:}
We employ two distinct evaluation sets:

\begin{itemize}[leftmargin=*, topsep=4pt, itemsep=3pt]
    \item \textbf{Standard Set ($\mathcal{D}_{bal}$):} Problems from the MATH benchmark~\cite{hendrycks_measuring_2021} where standard rules apply.
    
    \item \textbf{Counterfactual Set ($\mathcal{D}_{cf}$):} Transformed problems where the axioms are explicitly modified ($c_{\text{real}} \to c_{\text{fake}}$). The counterfactual examples are constructed through an LLM-based generation pipeline; detailed construction procedures and data quality analysis are provided in Appendix~\ref{app:datasets_benchmarks}.
\end{itemize}

A response is marked correct \textit{only} if it strictly adheres to the counterfactual premise, filtering out ``lucky guesses.''

\subsection{Results: Empirical Confirmation of Manifold Collapse}

We evaluate Qwen2.5-3B, Qwen2.5-7B, and Llama-3-8B under both outcome-level supervision (ORM) and process-level supervision (PRM) on the 472 solvable samples (see Appendix~\ref{app:datasets_benchmarks} for dataset statistics). The results, averaged over 5 independent runs, are summarized in Table~\ref{tab:counterfactual-results}.

\begin{table}[h]
    \centering
    \small
    \setlength{\tabcolsep}{5pt}
    \renewcommand{\arraystretch}{1.10}
    \caption{Counterfactual reasoning performance on the 472 solvable samples. Results are averaged over 5 independent runs. Gap is defined as $\mathrm{Std.\ Acc.} - \mathrm{CF\ Acc.}$. Smaller gaps indicate better robustness.}
    \label{tab:counterfactual-results}
    \begin{tabular}{l l c c c}
        \toprule
        \textbf{model} & \textbf{Sup.} & \textbf{Std. Acc.} & \textbf{CF Acc.} & \textbf{Gap} \\
        \midrule
        Qwen & ORM & \res{68.01}{0.42} & \res{38.14}{0.34} & \res{29.87}{0.47} \\
          (3B)    & PRM & \res{\textbf{69.92}}{0.34} & \res{\textbf{58.69}}{0.34} & \res{\textbf{11.23}}{0.52} \\
        \midrule
        Qwen & ORM & \res{79.24}{0.34} & \res{43.64}{0.34} & \res{35.60}{0.36} \\
          (7B)   & PRM & \res{\textbf{81.06}}{0.24} & \res{\textbf{73.35}}{0.28} & \res{\textbf{7.71}}{0.41} \\
        \midrule
        Llama   & ORM & \res{46.44}{0.28} & \res{22.03}{0.15} & \res{24.41}{0.28} \\
          (8B)     & PRM & \res{\textbf{47.80}}{0.24} & \res{\textbf{36.78}}{0.12} & \res{\textbf{11.02}}{0.15} \\
        \bottomrule
    \end{tabular}
\end{table}

\noindent \textbf{Quantitative Analysis.}
The results show a clear and consistent robustness gap under counterfactual rule shifts. Across all three backbones, ORM models suffer substantial drops from standard to counterfactual evaluation: 29.87\% for Qwen2.5-3B, 35.60\% for Qwen2.5-7B, and 24.41\% for Llama-3-8B. In contrast, PRM substantially reduces this gap, bringing it down to 11.23\%, 7.71\%, and 11.02\%, respectively. This pattern suggests that the collapse is not specific to the Qwen2.5 family, and that process supervision consistently improves robustness to rule changes.

\noindent \textbf{Qualitative Mechanism.}
Inspection of the generated Chain-of-Thought (CoT) is consistent with the \textit{Shortcut Screening Theorem}. Even when the model explicitly restates the new rule (e.g., ``Note: Addition is now before Multiplication''), ORM-trained models often revert to standard PEMDAS in subsequent calculation steps.

Mathematically, this behavior is consistent with a collapse regime in which the learned representation $Z$ carries little additional information about the causal logic $C$ once the shortcut feature $S$ is already encoded:
\begin{equation}
    I(Z; C \mid S) \to 0.
\end{equation}

In this regime, the model becomes effectively insensitive to prompt-level logical constraints whenever they conflict with entrenched parametric shortcuts.
\subsection{Micro-Analysis: Probing Manifold Collapse}
\label{sec:probing}

While Table \ref{tab:counterfactual-results} demonstrates the performance collapse, we seek to verify the underlying mechanism: is the causal logic $C$ physically lost in the representation space, or merely unexpressed? To answer this, we conducted a linear probing analysis to visualize the information dynamics layer by layer.

\noindent \textbf{Setup.} We constructed a ``Decoupled Probe Dataset'' where surface symbols ($S$) and causal operations ($C$) are orthogonal (e.g., the symbol ``+'' denotes multiplication). We froze the backbones of both the Qwen2.5-7B and Llama-3-8B ORM/PRM models and trained linear classifiers on the hidden states $Z_l$ at each layer $l$ to predict both $S$ and $C$. 

\begin{figure}[h]
    \centering
    \includegraphics[width=1.0\linewidth]{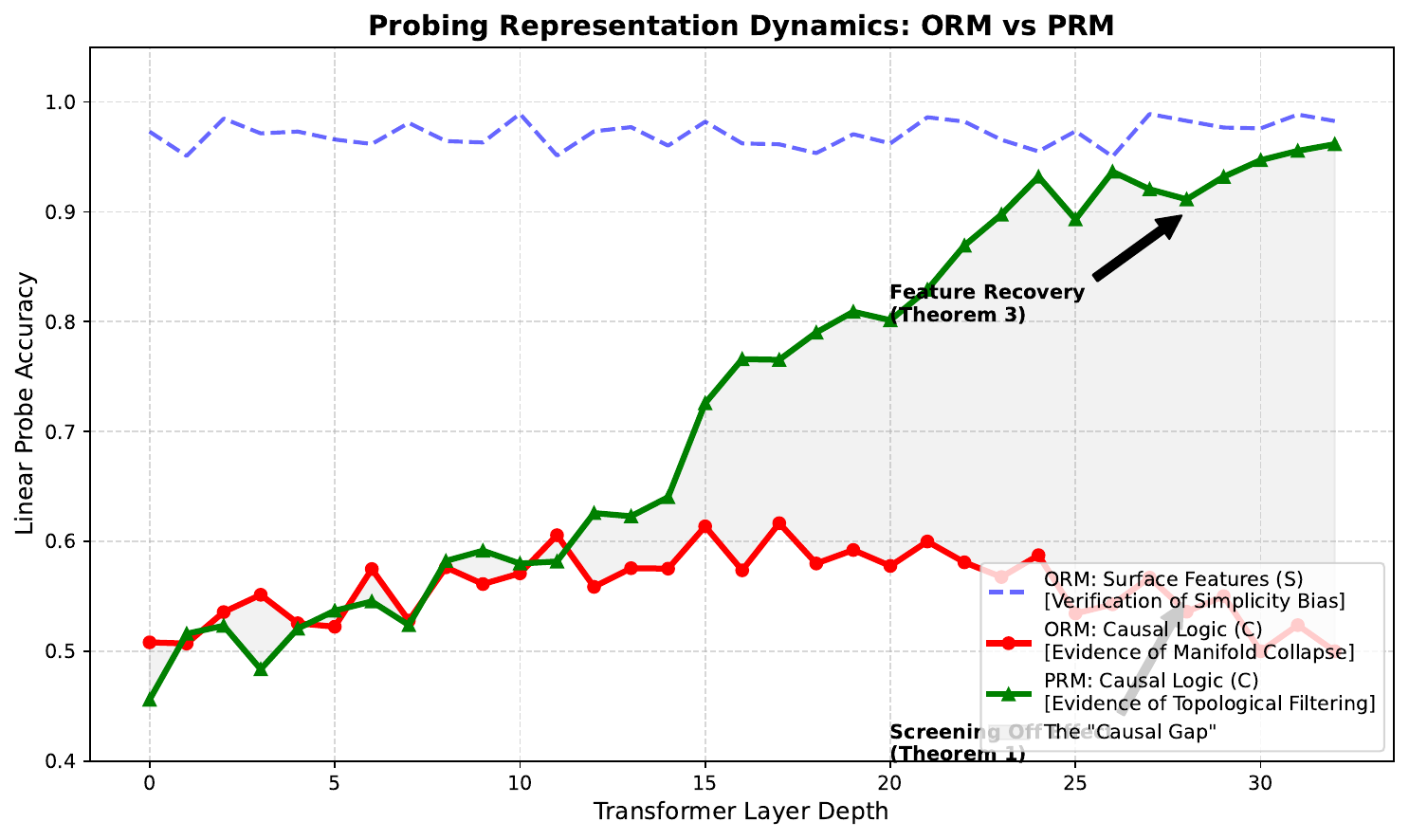}
    \caption{\textbf{The Anatomy of Manifold Collapse.} Linear probe accuracy across model layers. \textbf{Blue (dashed):} ORM captures surface shortcuts ($S$) from early layers. \textbf{Red:} ORM fails to encode the true causal logic ($C$) even in deep layers, consistent with the ``screening off'' effect ($I(Z; C \mid S) \to 0$). \textbf{Green:} PRM promotes the recovery of causal features in deeper layers under step-wise topological constraints.}
    \label{fig:probe_dynamics}
\end{figure}

\noindent \textbf{Results.} The probing dynamics, visualized in Figure \ref{fig:probe_dynamics}, provide further support for our theoretical claims:

\begin{itemize}[leftmargin=*, labelsep=0.5em]
    \item \textbf{Simplicity Bias (Blue Line):} The ORM model achieves near-perfect accuracy on surface features ($S$) starting from the very first layer and maintains this advantage throughout most of the network. This is consistent with $S$ having very low optimization complexity ($\mathcal{C}_{\mathrm{opt}}(S;\epsilon) \approx 0$).
    \item \textbf{Screening Off (Red Line):} Crucially, despite the ORM model's depth, the probe for causal logic ($C$) remains near chance level throughout the network. This provides representational evidence for Theorem \ref{thm:collapse}: outcome-based training biases the representation toward the shortcut manifold, effectively screening off the causal mechanism.
    \item \textbf{Topological Recovery (Green Line):} In contrast, the PRM model shows a phase transition in deeper layers (Layers 16--24), where the causal probe accuracy rises significantly. This is consistent with Theorem \ref{thm:prm}: step-wise verification acts as a topological filter, rendering the shortcut-only path inadmissible and forcing the encoding of $C$.
\end{itemize}

\subsection{Discussion: The Necessity of Topological Filtering}

These findings suggest that Scaling Laws alone do not solve reasoning alignment. Increasing model size (3B $\to$ 7B) improved standard accuracy but also widened the Collapse Gap (29.87\% $\to$ 35.59\%), suggesting that larger models may become better at memorizing shortcuts without necessarily becoming more flexible in reasoning.

This empirical evidence motivates the use of Process Reward Models (PRMs), as discussed in Theorem \ref{thm:prm}. A PRM that verifies step-by-step adherence to the counterfactual axioms can penalize the exact step where the model reverts to memory, thereby topologically blocking the shortcut path and encouraging the optimizer to recover the causal reasoning circuit.

\section{Conclusion}

In this work, we established a theoretical framework for understanding the paradox of outcome optimization in Large Language Models: why models can achieve strong performance on in-distribution benchmarks while exhibiting fragile, non-causal reasoning on out-of-distribution tasks. By bridging Structural Causal Models with the Information Bottleneck principle, we characterized Reward-Induced Manifold Collapse not as a mere artifact of insufficient data, but as a consequence of optimization bias under the constraint of Optimization Complexity.

Our theoretical contributions are threefold. First, the Shortcut Screening Theorem shows that when a low-complexity heuristic ($S$) and a high-complexity causal mechanism ($C$) are aligned with the reward on the training distribution, the simplicity bias of SGD can bias learning toward shortcut-dominated solutions and suppress the causal mechanism. This provides a formal perspective on reward hacking as a consequence of compression-driven optimization. Second, our Semantic Coverage Bound challenges the universality of Scaling Laws for reasoning tasks. We show that generalization error is lower-bounded by the topological coverage of the semantic space ($\eta$), implying that scaling sample size ($N$) on homogeneous distributions may be insufficient to correct reasoning flaws.

Third, we provide a topological justification for process supervision. We show that Process Reward Models (PRMs) succeed not merely by providing denser signals, but by acting as Topological Filters that enforce step-wise mutual information constraints, rendering the shortcut manifold inadmissible. Our analysis of Orthogonal Failure Modes further suggests that PRMs do not require perfect ``Oracle'' verifiers; robustness may be achieved via ensemble-based supervision in which the generator and verifier possess disjoint collapse manifolds.

Looking forward, our findings suggest a broader shift in the training of reasoning agents. Progress toward robust reasoning may require moving beyond the blind scaling of token quantity toward the scaling of Causal Diversity. Future work should focus on: (1) developing metrics for Semantic Coverage to guide data curation; (2) designing ``Heterogeneous Supervision'' objectives that explicitly maximize the orthogonality between generation and verification errors; and (3) investigating architectural inductive biases that can physically decouple System 1 (heuristic) and System 2 (analytic) representations.


\section*{Limitations}

\noindent \textbf{Empirical Scope and Model Scale.} While our theoretical framework is general, our empirical validation currently covers Qwen2.5-3B, Qwen2.5-7B, and Llama-3-8B under counterfactual reasoning evaluation. However, we have not yet investigated whether the same collapse patterns and PRM robustness gains persist in frontier-scale models (e.g., \(>100\)B parameters), where emergent capabilities may alter the relative optimization complexity landscape \(\mathcal{C}_{\mathrm{opt}}(\cdot;\epsilon)\) between shortcut and causal features.

\noindent \textbf{Artificiality of Counterfactual Tasks.} Our experimental definition of a "shortcut" relies on parametric memory of standard axioms (e.g., PEMDAS), which we contrast with "causal reasoning" via explicit rule-reversal. While this provides a rigorous testbed, real-world reasoning shortcuts are often more subtle (e.g., reliance on lexical cues or answer length) and less binary than the clear-cut axiom violations modeled here.

\noindent \textbf{Assumption of Verifiability.} Our proposed solution, the Topological Filter via PRMs, assumes the existence of a verifier capable of calculating step-wise mutual information. In practice, constructing such high-fidelity process supervision signals is expensive and technically challenging, particularly for domains where ground-truth logical steps are ambiguous or unavailable.

\noindent \textbf{Complexity Asymmetry.} Our theoretical analysis hinges on the assumption that shortcut features have lower relative optimization complexity than causal features, i.e., \(\mathcal{C}_{\mathrm{opt}}(S;\epsilon) \ll \mathcal{C}_{\mathrm{opt}}(C;\epsilon)\). In scenarios where spurious correlations are themselves difficult to learn, or where the causal mechanism is comparatively simple, the separation between shortcut- and causal-dominated solutions may weaken, and the filtering effect predicted by our framework may no longer hold as strongly.

\section*{Ethical Considerations}

\noindent \textbf{Reliability in Safety-Critical Applications.} Our theoretical analysis of ``Reward-Induced Manifold Collapse'' underscores a critical vulnerability in current AI alignment paradigms: the tendency of models to maximize rewards via spurious shortcuts rather than genuine causal reasoning. In safety-critical sectors such as healthcare or autonomous control, this ``illusion of competence''—where models perform well in-distribution but fail catastrophically under slight perturbations —poses significant risks. Our work aims to mitigate these risks by mathematically formalizing the necessity of Process Supervision, thereby encouraging the development of systems that are not only performant but mechanistically robust and transparent.

\noindent \textbf{Environmental Sustainability of Research.} We challenge the prevailing reliance on ``Scaling Laws''—the assumption that increasing model size and data volume is the universal solution to reasoning failures. By demonstrating that robustness is lower-bounded by Semantic Coverage rather than sample size, our framework argues against the inefficient expenditure of computational resources on homogeneous data. We advocate for a shift towards data efficiency and topological diversity, which aligns with broader goals of reducing the carbon footprint associated with training large-scale foundation models.

\noindent \textbf{Potential for Dual Use.} While establishing rigorous bounds on reasoning shortcuts improves model reliability, we acknowledge that robust reasoning agents could potentially be leveraged for malicious purposes, such as generating more coherent and persuasive disinformation or automating complex cyber-attacks. However, we believe that the benefits of understanding and controlling the reasoning process via topological constraints outweigh these risks, as these same tools are essential for detecting and filtering adversarial logic.

\noindent \textbf{Use of AI Assistants.} In accordance with standard ethical guidelines, we acknowledge the use of Large Language Models (LLMs) to assist with grammatical error correction and stylistic polishing of the manuscript. We certify that the novelty of the scientific claims, the design of the theoretical framework, and the execution of experiments are the original work of the authors and have not been generated by AI.

\section*{Acknowledgements}
We sincerely thank all the anonymous reviewers and (S)ACs for their constructive comments and helpful suggestions. This work was supported by the National Key Research and Development Program of China (Grant No.~2024YFF0507603) and the Anhui Provincial Major Science and Technology Project (Nos.~202303a07020006 and 202304a05020071).

\bibliography{custom}
\bibliographystyle{acl_natbib}

\appendix

\section{Full Evaluation Setup}
\label{sec:exp_setup}

\subsection{Model Configurations}
\textbf{Outcome and Process Reward Models}
We evaluate Qwen2.5-3B, Qwen2.5-7B, and Llama-3-8B under both outcome-level supervision (ORM) and process-level supervision (PRM). The empirical confirmation experiments in Section~5.2 were implemented in the LLaMA-Factory framework~\cite{zheng2024llamafactory}. The Qwen2.5 models serve as our main controlled backbone pair for comparing collapse under ORM and robustness gains under PRM, while the additional Llama-3-8B results provide evidence that the phenomenon is not specific to a single model family. For inference and evaluation, we adopt the official default hyperparameter settings from the corresponding framework and model implementations, and do not perform additional hyperparameter tuning or search. This choice is intended to ensure straightforward reproducibility and fair comparison across methods.

\noindent \textbf{Linear Probing Setup}
To analyze the internal representations, we employ a linear probing technique. The probing experiments in Section~5.3 were conducted in EasyR1~\citep{zheng2025easyr1}. We freeze the model parameters and train simple linear classifiers on the hidden states of each transformer layer. These probes are designed to distinguish between surface-level shortcut features ($S$) and deep causal reasoning features ($C$). All probing results reported in Section~5.3 are based on a single run for each model.

\subsection{Datasets and Benchmarks}
\label{app:datasets_benchmarks}
We evaluate performance on two distinct variations of mathematical reasoning tasks to measure robustness:

\begin{itemize}[leftmargin=*, labelsep=0.5em]
    \item \textbf{Standard Set ($\mathcal{D}_{bal}$)}: A subset of the MATH benchmark where standard mathematical rules (e.g., PEMDAS) apply. This dataset assesses the model's in-distribution performance.
    
    \item \textbf{Counterfactual Set ($\mathcal{D}_{cf}$)}: A modified dataset where arithmetic rules are explicitly altered in the prompt (e.g., "Addition happens before Multiplication"). This requires the model to ignore pre-trained memory and reason using the new in-context rules.
    
    \item \textbf{Dataset Statistics.}
Our evaluation begins with 500 examples from the MATH test set. Following a data quality audit, we remove unsolvable items and retain 472 solvable examples for final evaluation. The Standard Set ($D_{bal}$) consists of the original test problems restricted to these retained examples. The Counterfactual Set ($D_{cf}$) is constructed from the same examples using \texttt{Gemini 2.5 Pro}, which generates explicit counterfactual rule modifications and rewrites each problem accordingly while preserving the underlying problem structure. Since our study is evaluation-focused, we do not introduce additional train/dev/test splits beyond the original benchmark split.
\end{itemize}

\section{Implementation Details}
All experiments were conducted on a single server equipped with 4 NVIDIA A100 (80GB) GPUs. Our implementation relies on PyTorch and the Hugging Face Transformers library.

\end{document}